\newcommand{\homepath}{./}
\newcommand{\figtablepath}{./FigureTable/}
\icmltitlerunning{Variational Inference for Deep Probabilistic Canonical Correlation Analysis}
\begin{document}
\twocolumn[
\icmltitle{Variational Inference for Deep Probabilistic Canonical Correlation Analysis}

\begin{icmlauthorlist}
	\icmlauthor{Mahdi Karami}{ua}
	\icmlauthor{Dale Schuurmans}{ua}
\end{icmlauthorlist}

\icmlaffiliation{ua}{Department of Computer Science, University of Alberta, Edmonton, Alberta, Canada}

\icmlcorrespondingauthor{Mahdi Karami}{karami1@ualberta.ca}

\icmlkeywords{Multi-view, probabilistic CCA, Variational inference}

\vskip 0.3in
]



\printAffiliationsAndNotice{}  

\begin{abstract}
In this paper, we  propose a deep probabilistic multi-view model that is composed of a linear multi-view layer based on probabilistic canonical correlation analysis (CCA) description in the latent space together with deep generative networks as observation models.
The network is designed to decompose the variations of all views into a shared latent representation and a set of view-specific components where the shared latent representation is intended to describe the common underlying sources of variation among the views. 
An efficient variational inference procedure is developed that approximates the posterior distributions of the latent probabilistic multi-view layer while taking into account the solution of probabilistic CCA. 
A generalization to models with arbitrary number of views is also proposed. 
The empirical studies confirm that the proposed deep generative multi-view model 
can successfully extend deep variational inference to multi-view learning 
while it efficiently integrates  the relationship between multiple views to alleviate the difficulty of learning. 
\end{abstract}


\section{Introduction} \label{sec:intro}
When observations consist of multiple views or modalities of the same underlying source of variation,
a learning algorithm should efficiently account for the complementary information to 
alleviate learning difficulty \citep{chaudhuri2009mvClusteringCCA} 
and improve accuracy.
A well-established method for two-view analysis is given by canonical
correlation analysis (CCA) \citep{hotelling1992CCA}, 
a classical subspace learning technique 
that extracts the common information between two multivariate random variables by projecting them onto a subspace. 
CCA, as a standard model for unsupervised two-view learning, has been used in a broad range of tasks such as dimensionality reduction, visualization and time series analysis \citep{xia2014robust}.

The goal of representation learning is to capture the essence of data and extract its natural features.
Such features can be categories or cluster memberships.
In multi-view data, the relationship between different views should be leveraged by the  representation learning algorithms to enhance feature extraction.  
Learning representations in real-world applications, where the data is typically high-dimensional with complex structure, poses significant challenges and necessitates flexible and expressive yet scalable models such as deep generative neural networks to be applied.

It has been shown in \citep{chaudhuri2009mvClusteringCCA} that by projecting multi-view data onto low-dimensional subspaces using CCA, cluster memberships can be recovered under a weak separation condition thus resulting in easier clustering in the subspace. 
Nevertheless, CCA exhibits poor generalization when trained on small training sets, therefore \citep{klami2007localDependentComponents, klami2013bayesianCCA} adopts a Bayesian approach to solve a probabilistic interpretation of CCA.
However, real applications involve nonlinear subspaces where more than two view are available.
Recently, deep learning  has received renewed interest 
as a standard approach for describing highly expressive models for increasingly complex datasets. 
In multi-view learning, several deep learning based approaches have been successfully  extended 
\citep{ngiam2011multimodalDL, andrew2013DCCA, wang2015deepMV, wang2017acousticVCCA, abavisani2018deep}.

\textbf{Main contributions:}
In this work, we 
present a modified formulation of probabilistic CCA that enables an extension to deep generative models.
Generalizing the approach to an arbitrary number of views is also discussed.
In this approach, linear probabilistic layers are extended to deep generative multi-view networks 
that capture the variations of the views by a \emph{shared latent representation} that describes  most of the variability (essence) of multi-view data,
and a set of \emph{view-specific factors}.
Variational inference provides a powerful tool for scaling probabilistic models to complex problems and large scale datasets \citep{rezende2014stochastic, kingma2013VAE, rezende_shakir2015_variationalInfWithNormalizingFlows}.
Hence, to design a scalable training algorithm, 
we follow variational inference principles, 
taking into account the probabilistic CCA formulation
to achieve an approximate posterior 
for the latent linear multi-view layer.
Empirical studies
confirm that the proposed deep generative multi-view model can efficiently integrate the relationship between multiple views to alleviate learning difficulty, which is the goal of multi-view learning approaches \citep{chaudhuri2009mvClusteringCCA}.

\paragraph{Notation and Definitions}
Throughout the paper, 
bold lowercase variables denote vectors (\eg $\vx$) or vector-valued random variables (\eg $\rvx$),
bold uppercase are used for matrices (\eg $\mX$) or matrix-valued random variables (\eg $\rmX$) and
unbold lowercase are scalars (\eg $x$) or random variables (\eg $\rx$). 
The transpose of a matrix is denoted as $\mA^{\top}$ and
$\ve^{(i)} = [0,\dots,0,1,0,\dots,0]$ is the standard basis vector with a 1 at $i$th position. 
There are $M$ views in total and subscripts are intended to identify the view-specific variable, (\eg $\rvx\mi, \mSigma_{mm}$), which is different from an element of a vector that is specified by subscript (\eg $\rx_{mi}$).
The difference should be clear from context.

\section{Probabilistic CCA} \label{sec:PCCA}

Canonical correlation analysis (CCA) \citep{hotelling1992CCA} is a classical subspace learning method that extracts information from the cross-correlation between two variables.
Let  $\rvz_1 \in \RR^{d_1}$ and $\rvz_2 \in \RR^{d_2}$ be a pair of random vectors corresponding to two different views. CCA linearly projects these onto the subspace $\RR^{d_0}$ as
$\rvr_1 = \mU_1^{\top} \zvec_1$ and $\rvr_2 = \mU_2^{\top} \zvec_2$, where $\mU_1 \in \RR^{d_1 \times d_0}$ and $\mU_2 \in \RR^{d_2 \times d_0}$ and $ 0 < d_0 \le \min\{d_1, d2\}$, 
such that each pair of components $ (\rvr_1(i), \rvr_2(j))$ are maximally correlated if $ i=j$ and uncorrelated otherwise. 
Let $(\vmu_1, \mSigma_{11})$ and $(\vmu_2, \mSigma_{22})$ be the mean and covariance matrices of $\rvz_1$ and $\rvz_2$, respectively, and $\mSigma_{12}$ is their cross-covariance. Then CCA can be formulated as the optimization problem
\begin{align}
&\max_{\mU_1, \mU_2} \tr [ \mU_1^{\top} \mSigma_{12} \mU_2 ]\\
&\mU_1^{\top} \mSigma_{11} \mU_1 = \mU_2^{\top} \mSigma_{22} \mU_2 = \eye_{d_0} \nonumber
\end{align}
Given $ (\vv_{1i}, \vv_{2i}), ~ i \in [1, ..., d_0]$ as the pairs of left and right singular vectors corresponding to $d_0$ largest singular values, $p_i ~ i \in [1, ..., d_0]$, of the correlation matrix 
$\mSigma_{11}^{-1/2} \mSigma_{12} \mSigma_{22}^{-1/2}$, 
 the solution to the CCA problem is given by $ (\vu_{1i}, \vu_{2i} ) =  (\mSigma_{11}^{-1/2} \vv_{1i},  \mSigma_{22}^{-1/2} \vv_{2i} ), ~ i \in [1, ..., d_0]$ where $ (\vu_{1i}, \vu_{2i} )$, also called \textit{canonical pairs of directions}, form the columns of $(\mU_{1}, \mU_{2} )$ and $\mP_{d_0}=\text{diag}([p_0, ..., p_{d_0}])$ is the diagonal \textit{matrix of canonical correlations}.

\citet{bachPCCA} and \citet{browne1979IBFA}  proposed a probabilistic generative interpretation to the classical CCA problem
that reveals the shared latent representation explicitly. 
An extension of their results to a more flexible model can be expressed as follows.

\begin{figure*}[t]
	\begin{minipage}[c]{0.60\textwidth}
		{\includegraphics[width=0.95\textwidth]{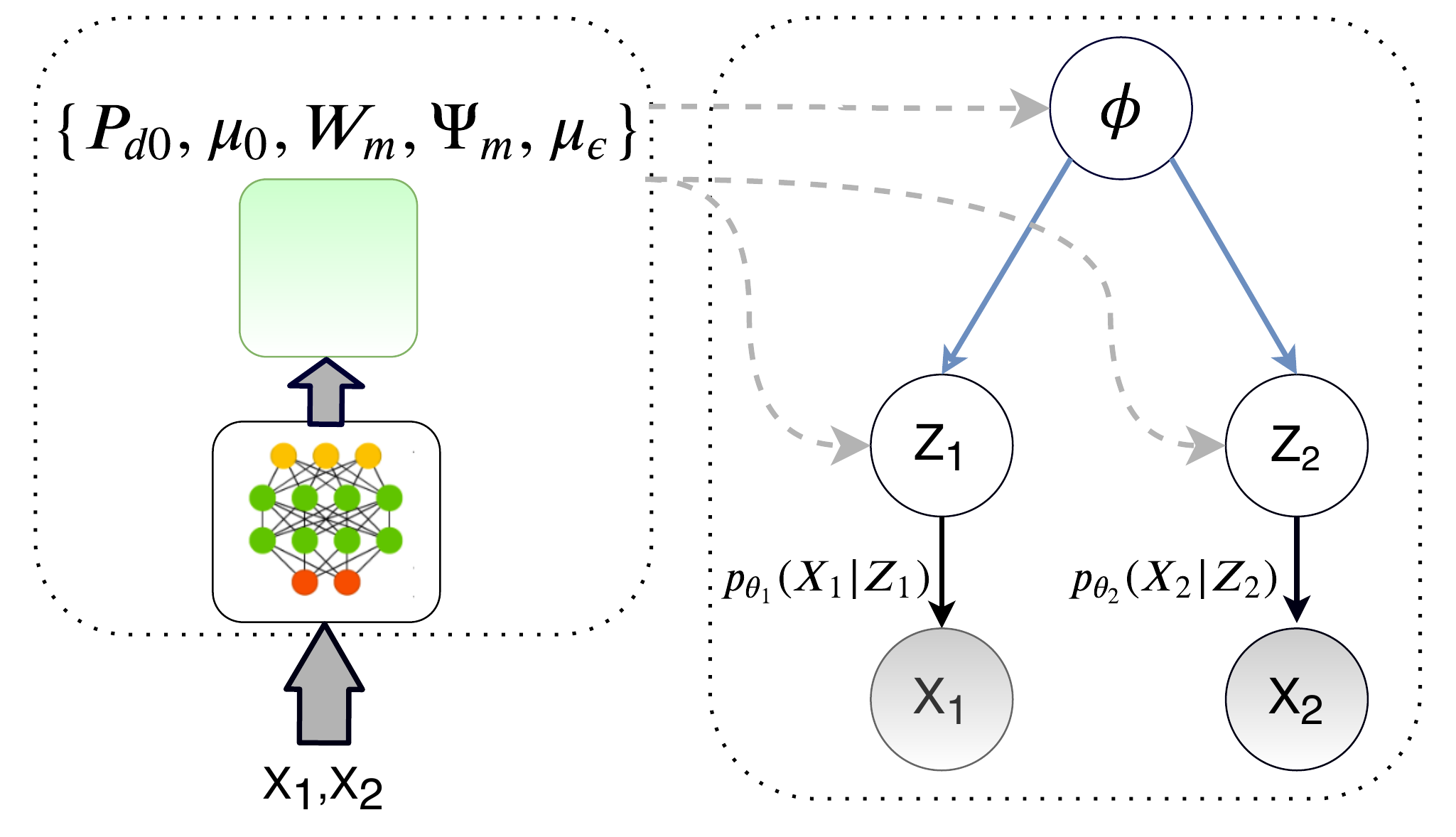}}
	\end{minipage}
	\begin{minipage}[c]{0.01\textwidth}
		\text{}
	\end{minipage}
	\begin{minipage}[c]{0.39\textwidth}
		\caption[Graphical representation of the deep probabilistic multi-view model]{
			Graphical representation of the deep probabilistic CCA model,
			where the blue edges belong to latent linear probabilistic CCA  model
			and the black edges represent the deep nonlinear observation networks (decoders)
			$p_{\theta\mi }(\rvx\mi \given \rvz\mi ) = g\mi (\rvz\mi; \theta\mi )$.
			{Shaded nodes denotes observed views and dashed line represent the stochastic samples drawn from the approximate posteriors.}
		} \label{fig:graphical_PCCA} 
	\end{minipage}
\vskip -.1in
\end{figure*}

\begin{theorem} \label{thm:PCCA}
	Assume the probabilistic generative model for the graphical model in Figure \ref{fig:graphical_PCCA} as:
	\begin{align} \label{eqn:gPCCA}
	\statevec  &\sim \normal(\vmu_0, \eye_{d_0}), \quad 0 < d_0 \le \min\{d_1, d_2\} \\  	
	\rvz_1 \given  \statevec &\sim \normal(\mW_1 \statevec + \vmu_{\eps_1}, \mPsi_1),
	\quad \mW_1 \in \RR^{d_1 \times d_0}, \mPsi_1 \succcurlyeq 0 \nonumber\\  	
	\rvz_2 \given  \statevec &\sim \normal(\mW_2 \statevec + \vmu_{\eps_2}, \mPsi_2),
	\quad \mW_2 \in \RR^{d_2 \times d_0}, \mPsi_2 \succcurlyeq 0 \nonumber
	\end{align}
	where $\statevec $ is the shared latent representation.
	The maximum likelihood estimate of the parameters of this model can be expressed in terms of the canonical correlation directions as
	\begin{align} \label{eqn:ml_est_PCCA}
	\hat{\mW}_1  &= \mSigma_{11} \mU_{1} \mM_1 \\
	\hat{\mW}_2  &= \mSigma_{22} \mU_{2} \mM_2 \nonumber \\
	\hat{\mPsi}_1  &= \mSigma_{11} - \hat{\mW}_1 \hat{\mW}_1^{\top} \nonumber \\
	\hat{\mPsi}_2  &= \mSigma_{22} - \hat{\mW}_2 \hat{\mW}_2^{\top} \nonumber \\
	\hat{\vmu}_{\eps_1} &= \vmu_1 - \hat{\mW}_1 \vmu_0 \nonumber \\
	\hat{\vmu}_{\eps_2} &= \vmu_2 - \hat{\mW}_2 \vmu_0 \nonumber 
	\end{align}
	where $\mM_1, \mM_2 \in \RR^{d_0 \times d_0} $ are arbitrary matrices with spectral norms smaller than one that satisfy $\mM_1 \mM_2^{\top} = \mP_{d_0}$,
	and the residual errors terms can be defined as  $\rveps_1 := \rvz_1 - \mW_1 \statevec$ and $\rveps_2 := \rvz_2 - \mW_2 \statevec$.  
	This probabilistic graphical model induces conditional independence of $\rvz_1$ and $\rvz_2$ given $\statevec$.
	The parameter $ \vmu_0$ is not identifiable by maximum likelihood. 

\end{theorem}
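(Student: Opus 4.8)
The plan is to reduce the statement to an algebraic moment‑matching problem. First I would marginalize the shared latent variable $\statevec$ out of the generative model \eqref{eqn:gPCCA}: since $\rvz_1,\rvz_2$ are conditionally linear–Gaussian in $\statevec$ and $\statevec$ is Gaussian, the stacked vector $(\rvz_1^{\top},\rvz_2^{\top})^{\top}$ is jointly Gaussian, and a routine computation of its first two moments gives mean blocks $\vmu_m = \mW_m\vmu_0 + \vmu_{\eps_m}$, marginal covariances $\mSigma_{mm} = \mW_m\mW_m^{\top} + \mPsi_m$, and cross‑covariance $\mSigma_{12} = \mW_1\mW_2^{\top}$, for $m=1,2$. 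The likelihood of $N$ i.i.d.\ observations therefore depends on the parameters only through this joint Gaussian, hence only through the pair (model mean, model covariance); by the standard fact that the Gaussian log‑likelihood is maximized over all Gaussians at the empirical mean and covariance, any parameter setting inside the model class that reproduces those empirical moments is a maximum‑likelihood estimate. So the task becomes: describe all $(\mW_1,\mW_2,\mPsi_1,\mPsi_2,\vmu_0,\vmu_{\eps_1},\vmu_{\eps_2})$ solving $\mW_1\mW_2^{\top} = \mSigma_{12}$, $\mSigma_{mm} - \mW_m\mW_m^{\top} = \mPsi_m \succcurlyeq 0$, and $\mW_m\vmu_0 + \vmu_{\eps_m} = \vmu_m$, where now $\mSigma_{mm},\mSigma_{12},\vmu_m$ denote the empirical quantities.

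The mean equations are the easy part and also settle the last sentence of the theorem: for \emph{any} $\vmu_0$, setting $\vmu_{\eps_m} := \vmu_m - \mW_m\vmu_0$ reproduces $\vmu_m$, so the data constrain only $\vmu_m$ and never $\vmu_0$ — which is exactly non‑identifiability of $\vmu_0$. Likewise $\rvz_1 \perp \rvz_2 \given \statevec$ requires no real argument: it is built into \eqref{eqn:gPCCA}, where $\rvz_1$ and $\rvz_2$ are drawn from separate conditionals given $\statevec$, so the joint conditional density factorizes.

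The substance is the covariance part, and here I would use the canonical‑correlation machinery already set up before the theorem. From the CCA solution, the columns $\vv_{mi}$ of $\mSigma_{mm}^{1/2}\mU_m$ are the top‑$d_0$ singular vectors of $\mSigma_{11}^{-1/2}\mSigma_{12}\mSigma_{22}^{-1/2}$; in particular they are orthonormal, $\mU_m^{\top}\mSigma_{mm}\mU_m = \eye_{d_0}$, and (when $\operatorname{rank}\mSigma_{12}\le d_0$) $\mSigma_{12} = \mSigma_{11}\mU_1\,\mP_{d_0}\,\mU_2^{\top}\mSigma_{22}$. Plugging the claimed $\hat{\mW}_1 = \mSigma_{11}\mU_1\mM_1$, $\hat{\mW}_2 = \mSigma_{22}\mU_2\mM_2$ into $\hat{\mW}_1\hat{\mW}_2^{\top}$ and using $\mM_1\mM_2^{\top} = \mP_{d_0}$ recovers $\mSigma_{12}$ exactly; conjugating $\hat{\mPsi}_1 = \mSigma_{11} - \hat{\mW}_1\hat{\mW}_1^{\top}$ by $\mSigma_{11}^{-1/2}$ turns the PSD requirement into $\eye_{d_0} \succcurlyeq (\mSigma_{11}^{1/2}\mU_1)\mM_1\mM_1^{\top}(\mSigma_{11}^{1/2}\mU_1)^{\top}$, which follows from $\|\mM_1\|_2 \le 1$ together with $(\mSigma_{11}^{1/2}\mU_1)(\mSigma_{11}^{1/2}\mU_1)^{\top}$ being an orthogonal projector (and symmetrically for view $2$). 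I would close by noting that admissible $\mM_1,\mM_2$ exist — e.g.\ $\mM_1 = \mM_2 = \mP_{d_0}^{1/2}$, valid since the canonical correlations satisfy $p_i<1$ — and that the gauge freedom $(\mM_1,\mM_2,\vmu_0)$ (which includes the obvious right‑rotation $\mM_m\mapsto\mM_m\mR$) is precisely what the joint distribution leaves undetermined, so this family exhausts the MLEs.

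The step I expect to be the real obstacle is justifying the word \emph{maximum} in full generality: one must argue that the empirical marginals $\mSigma_{mm}$ can always be fitted exactly "for free" while the rank‑$\le d_0$ constraint on $\mSigma_{12}=\mW_1\mW_2^{\top}$ forces, at the optimum, keeping the $d_0$ largest canonical correlations — after whitening each view this is an Eckart--Young / best‑low‑rank‑approximation argument for the cross‑covariance operator. If instead one works with population moments, or assumes $d_0 \ge \operatorname{rank}\mSigma_{12}$, exact moment matching is immediate and the theorem collapses to the verification above plus the parametrization claim.
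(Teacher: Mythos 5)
Your proposal is correct and follows essentially the same route as the paper's proof: both reduce the problem to the marginal Gaussian moments of $(\rvz_1,\rvz_2)$, dispose of the mean part first (which is exactly where the non-identifiability of $\vmu_0$ appears), and then rely on the Bach--Jordan canonical-correlation argument for the covariance part. The step you flag as the ``real obstacle'' --- the Eckart--Young argument that the optimum retains the $d_0$ largest canonical correlations when exact moment matching is impossible --- is precisely the step the paper also leaves unwritten, deferring it with ``the rest of the proof follows along the line of \citet{bachPCCA}.''
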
  
\begin{proof} 	See Appendix \ref{apdx:proof_PCCA}.    \end{proof}

In contrast to the results in \citep{bachPCCA} where $\vmu_0 = \mathbf{0} $,  
here we introduce $\vmu_0$ as an extra degree of freedom. We will see in the following 
the important role of this parameter
in optimizing the upper bound on the likelihood,
and also in the inference of the shared representation of deep probabilistic CCA.
We will also derive an analytical form to identify it based on the parameters of the probabilistic multi-view layer, 
and develop an identification method for the arbitrary matrices $\mM_1, \mM_2$.

\subsection{Generalization to arbitrary number of views} \label{sec:general_MV}
As an extension to an arbitrary number of views for probabilistic CCA, \citep{archambeau2009sparse} proposed a general probabilistic model as follows:
\begin{align} \label{eqn:gMBFA}
\rvz\mi &= \mW\mi \statevec_0 + \mV\mi \statevec\mi + \vmu\mi + \nu\mi, \\
\nu\mi & \sim \normal(\zerovec, \tau\mi^{-1} \eye_{d\mi}), \nonumber \\
& \quad \mW\mi \in \RR^{d\mi \times d_0},  \mV\mi \in \RR^{d\mi \times q\mi }, 
\forall m \in \{1, ..., M\} \nonumber
\end{align}
where $ \{\vmu\mi\}_{m=1}^{M}$ and $ \{\nu\mi\}_{m=1}^{M}$ are the view specific offsets and residual errors, respectively. 
This model  can also be viewed as multibattery factor analysis (MBFA) \citep{klami2014GroupFactorAnalysis, browne1980MBFA} in the statistics literature,
which describes the statistical dependence between all the views by a single latent vector, $\statevec_0$, and explains away the view-specific variations by factors private to each view.
Moreover, limiting to single view, 
if the prior on the view-specific factor is multivariate independent Gaussian, 
this model includes the probabilistic factor analysis as a special case, 
and also reduces to probabilistic PCA given the prior is isotropic as well. \citet{archambeau2009sparse} followed a Bayesian approach to the linear generative model \eq{eqn:gMBFA} and proposed a variational Expectation-Maximization algorithm to estimate the model parameters.

A reformulation for the parameters of this general model inspired by the maximum likelihood solutions of probabilistic CCA in Theorem \ref{thm:PCCA} is presented in Appendix \ref{apdx:GPMV}.

Although constraining the observation models to the classical linear model \eq{eqn:gPCCA} offers closed form inference for the latent variable(s) and efficient training algorithms the resulting power  is very limited in modeling increasingly complex data distribution.
On the other hand, the generative descriptions of the probabilistic models in general, and the probabilistic multi-view models \eq{eqn:gPCCA} and \eq{eqn:gMBFA} in particular, can be extended naturally as the building blocks of more complex hierarchical models \citep{klami2013bayesianCCA}.
A well established method to increase the capacity and improve the expressiveness of such models is
using deep neural networks to capture nonlinear and complex structures in data distribution,
therefore one can append deep generative networks on top of the linear probabilistic model to obtain a
combined model denoted in this work as \emph{deep probabilistic CCA} or \emph{deep probabilistic multi-view} network in general.
A graphical representation of this model is depicted  in Figure \ref{fig:graphical_PCCA}.
Let $\rvx := \{\rvx\mi \in \RR^{d'\mi }\}_{m=1}^{M}$ denote the collection of observations of all views and
$\rvz := \{\statevec \in \RR^{d_0}\} \cup \{\rvz\mi \in \RR^{d\mi }\}_{m=1}^{M}$ be the collection of  the shared latent representation and latent variables corresponding to each view.
The \textit{latent linear probabilistic CCA layer} of the form presented in \eq{eqn:gPCCA} 
models the linear cross-correlation between all latent variables
$\{\rvz\mi \}_{m=1}^{M}$ 
in the latent space
while the nonlinear generative \textit{observation networks},
also called the \textit{decoders} in the context of variational auto-encoders, are responsible for expressing the  complex variations of each view.
The observation models are described by deep neural networks
$p_{\theta\mi }(\rvx\mi \given \rvz\mi ) = g\mi (\rvz\mi; \theta\mi )$
with the set of model parameters $\theta= \{\theta\mi \}_{m=1}^{M}$.
In the following, an approximate variational inference approach is presented for training of this deep generative multi-view model.

\subsection{Variational inference} \label{sec:var_inf}
To obtain the maximum likelihood estimate of the model parameters,  it is desirable to maximize the marginal data log-likelihood averaged on the dataset $\mathcal{D} = \{x^{(i)}\},~ i=1,..,N$
\begin{align*} 
\log p_{\theta}(\rmX) 
= \frac{1}{N} \sum_{i=1}^{N} \log p_{\theta}(\vx^{(i)})
& \simeq  \E_{\rvx\sim \hat{P}_{data}} [ \log p_{\theta}(\rvx) ] 
\end{align*}

This objective requires marginalization over all the latent variables which entails the expectation of likelihood function $p_{\theta}(\rvx \given \rvz)$ over
the prior distribution of the set of latent variables, $p(\rvz)$.
The marginalization is typically intractable for flexible models, hence,
one work around is to follow the variational inference principle \citep{jordan1999intro_variational},
by introducing an approximate posterior distribution $q_{\eta}(\rvz \given \rvx)$ 
--- also known as \textit{variational inference network} in the context of \textit{amortized variational inference} --- 
and maximize a resulting variational lower bound on the marginal log-likelihood.
This approach  has   recently attained renewed interest and studied extensively, and is considered as a default, 
flexible statistical inference method.
 Approximate variational inference is often modeled by deep NNs with model parameters $\eta$.
Therefore, we obtain the variational lower bound \citep{rezende2014stochastic}
\begin{align} \label{eqn:ELBO0}
\log p_{\theta}(\rvx) \ge  \E_{q_{\eta}} [ \log p_{\theta}(\rvx \given \rvz) ]- \KL [ q_{\eta}(\rvz \given \rvx) \Vert p(\rvz) ] 
\end{align}
This bound, also known as \emph{evidence lower bound (ELBO)}, can be decomposed into two main terms:
first, the expectation of the log-likelihood function $\log p_{\theta}(\rvx \given \rvz)$, known as the \textit{negative reconstruction error}.
The conditional independence structure of the deep generative multimodal model implies that the likelihood function can be factored hence the negative reconstruction error can be expressed as
\begin{align*} 
\E_{q_{\eta}} [ \log p_{\theta}(\rvx \given \rvz) ] =
\sum_{m=1}^{M}\E_{q_{\eta}} [ \log p_{\theta\mi }(\rvx\mi \given \rvz\mi ) ] .\nonumber
\end{align*}
Although the expectations in above do not typically provide a closed analytical form, it can be approximated using Monte Carlo estimation by drawing $L$ random samples from the approximate posterior $q_{\eta}(\rvz \given \rvx)$ for each data point $\rvx = \vx^{(i)}$
\footnote{This, indeed, leads to the Monte Carlo approximation of the gradient of the expected log-likelihood required for the stochastic gradient descent training \citep{rezende2014stochastic}}.

The second term in ELBO is the \textit{KL divergence} between the approximate posterior and the prior distribution of the latent variables, which acts as a regularizer injecting prior knowledge about the latent variable into the learning algorithm.
Taking into account the conditional independence of the latent variables
$\{\rvz\mi \given \statevec \}$ induced by the probabilistic graphical model of latent linear layer \eq{eqn:gPCCA}, the approximate posterior of the set of latent variables can be factorized  as
$ q_{\eta}(\rvz \given \rvx) = q_{\eta}(\statevec \given  \rvx)
\prod_{m=1}^{M} q_{\eta}(\rvz\mi \given \statevec, \rvx)$
therefore, the KL divergence term can be decomposed to
\begin{align} \label{eqn:kl}
\KL [ q_{\eta}(\rvz \given \rvx) \Vert p(\rvz) ] = &
\KL [ q_{\eta}(\statevec \given \rvx) \Vert p(\statevec) ] +
\nonumber\\ &
\sum_{m=1}^{M}\KL [ q_{\eta}(\rveps\mi \given \rvx) \Vert p(\rveps\mi ) ]
\end{align}
More details can be found in appendix \ref{apdx:proof_mean_PCCA}.

We model the variational approximate posteriors by joint multivariate Gaussian distributions
with marginal densities 
$q_{\eta}(\rvz\mi \given \rvx\mi ) = \normal(\rvz\mi; \vmu\mi(\rvx\mi ), \mSigma_{mm}(\rvx\mi ))$,
that are assumed to be elementwise independent per each view for simplicity
so having diagonal covariance matrices
$\mSigma_{mm} = \text{diag}(\vsigma\mi^2(\rvx\mi ))$, $\vsigma_{m} \in  \RR^{d\mi }$
and the cross correlation specified by canonical correlation matrix
$\mP_{d_0}=\text{diag}(\vp(\rvx)), \vp \in  \RR^{d_0}$.
The parameters of these variational posteriors are specified by separate deep neural networks, also called \textit{encoders}.
In this model a set of encoders are used to output the view-specific moments $ \{(\vmu\mi, \vsigma\mi^2) = f\mi (\rvx\mi; \eta\mi )\}_{m=1}^{M}$,
and an encoder network describes the cross correlation
$\vp = f_0(\rvx^*; \eta_0)$,
whereas, depending on the application, $\rvx^*$ can be either one (or a subset) of the views, 
when only one (or a subset) of the views are available at the test time, 
\eg the multi-view setting where $\rvx^*= \rvx_1$,
or concatenation of all the views \eg in multi-modal setting.
Altogether, the inference model is parameterized by $\eta = \{\eta_0\} \cup \{\eta\mi \}_{m=1}^M$.
Having obtained the moments of approximate posteriors, we can obtain the canonical directions and subsequently the parameters of the probabilistic CCA model,
according to the results presented in theorem \ref{thm:PCCA}.
It is worth noting that the diagonal choices for covariance matrices $\{ \mSigma_{mm} \}_{m=1}^M$,
simplify the algebraic operations significantly, 
resulting in trivial SVD computation and matrix inversion
required in theorem \ref{thm:PCCA}. \footnote{These types of simplifying assumption on the approximate posteriors have also been used in various deep variational inference models \citep{rezende2014stochastic, kingma2013VAE}.
	Although the representation power of such linear latent model is limited but using flexible enough deep generative models, that can explain away the complex nonlinear structures among the data, can justify these choices.}
Consequently, one can also readily verify that
the canonical pairs of directions will be
$ (\vu_{1i}, \vu_{2i} ) = (\evsigma_{1i}^{-1/2}  \ve^{(i)}, \evsigma_{2i}^{-1/2} \ve^{(i)})$ 
where $\ve^{(i)}$ is the standard basis vector $[0,\dots,0,1,0,\dots,0]$ with a 1 at  $i$th position.

We assume isotropic multivariate Gaussian priors on the latent variables as 
$\statevec \sim \normal(\mathbf{0}, \lambda_{0}^{-1} \eye)$,
$\rveps\mi \sim \normal(\mathbf{0}, \lambda\mi^{-1} \eye)$
and specify the approximate posteriors by Gaussian distributions, as explained above,
the KL divergence terms can be computed in closed forms   \citep{kingma2013VAE}.
In the following, we provide an analytical approach to optimally identify  the mean of shared latent variable, $\vmu_{0}$, from the parameters of the model  and drive the optimal solution for $\mM_1, \mM_2$.
\begin{lemma} \label{thm:mean_PCCA}
	I) Rewriting the KL divergences with respect to the terms depending on the mean of latent factors give rise to the following optimization problem
	\begin{align} \label{eqn:opt_meanPhi}
	\min_{\vmu_{0}} & \frac{1}{2}  \lambda_0 \| \vmu_{0} \|^2 +
	\frac{1}{2} \sum_{m=1}^{M} \lambda\mi \| \vmu_{\eps\mi } \|^2 + \mathcal{K}  \\
	\text{s.t. }
	\vmu_{\eps\mi } &= \vmu\mi - \mW\mi \vmu_0, \quad \forall m \in \{1,..., M\}  \nonumber
	\end{align}
	where $\mathcal{K}$ is sum of the terms not depending on the means.
	Solving this optimization problem results the optimal minimizer
	\begin{align} \label{eqn:meanPhi}
	\vmu_{0}^* = (\lambda_0 \eye + \sum_{m=1}^{M} \lambda\mi \mW \mi^{\top} \mW \mi )^{-1} (\sum_{m=1}^{M} \lambda\mi \mW\mi^{\top} \vmu \mi ).
	\end{align}
	Having obtained the optimal $\vmu_0^*$, one can compute the means of the view-specific factors,  $\{\vmu_{\eps\mi }\}_{m=1}^M$, subsequently.\\
	II) Given similar prior distributions on all view-specific factors, $\{\rveps\mi\}_{\mi=1}^{M}$, the solutions that minimize the KL divergence term are $\mM \mi = \mM, \forall m \in {1, ..., M}$ where the matrix $\mM$ is the square root of matrix $\mP_{d_0}$, \ie 
	$\mM = \mP_{d_0}^{1/2} \mR$ and $\mR$ is an arbitrary rotation matrix.
\end{lemma}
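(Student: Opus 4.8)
The argument splits into the two parts of the lemma. For part I, the plan is to make the dependence of the KL regulariser on the latent means explicit and then recognise it as an unconstrained convex quadratic. Using the closed form for the KL divergence between two Gaussians, $\KL[q_\eta(\statevec\given\rvx)\Vert p(\statevec)]$ contributes the single $\vmu_0$-dependent term $\frac{1}{2}\lambda_0\|\vmu_0\|^2$, since $p(\statevec)=\normal(\zerovec,\lambda_0^{-1}\eye)$, and each $\KL[q_\eta(\rveps\mi\given\rvx)\Vert p(\rveps\mi)]$ contributes $\frac{1}{2}\lambda\mi\|\vmu_{\eps\mi}\|^2$; the remaining trace and log-determinant pieces do not involve the means and are absorbed into $\mathcal K$, which is precisely the program in (\ref{eqn:opt_meanPhi}), the equality constraint being just the definition $\rveps\mi=\rvz\mi-\mW\mi\statevec$ pushed through the posterior expectation. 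Substituting $\vmu_{\eps\mi}=\vmu\mi-\mW\mi\vmu_0$ eliminates the constraint and leaves $\frac{1}{2}\lambda_0\|\vmu_0\|^2+\frac{1}{2}\sum_m\lambda\mi\|\vmu\mi-\mW\mi\vmu_0\|^2+\mathcal K$, whose Hessian $\lambda_0\eye+\sum_m\lambda\mi\mW\mi^{\top}\mW\mi$ is positive definite because $\lambda_0>0$; hence the objective is strictly convex and its unique stationary point is the global minimiser. Setting the gradient to zero and rearranging yields (\ref{eqn:meanPhi}), and the optimal $\vmu_{\eps\mi}$ follow from the constraint.

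For part II, the plan is to substitute the reparametrisation of Theorem \ref{thm:PCCA} together with the diagonal-posterior simplification stated just before the lemma — under which $\mU\mi=\mSigma_{mm}^{-1/2}$, so that $\hat\mW\mi=\mSigma_{mm}^{1/2}\mM\mi$ and $\hat\mPsi\mi=\mSigma_{mm}^{1/2}(\eye-\mM\mi\mM\mi^{\top})\mSigma_{mm}^{1/2}$, the spectral-norm bound $\|\mM\mi\|<1$ being exactly what keeps $\hat\mPsi\mi\succcurlyeq0$ — and then isolate the $\mM\mi$-dependence of the KL terms. After dropping $\mM$-free constants, the regulariser reduces to $\sum_m\big[-\lambda\mi\tr(\mSigma_{mm}\mM\mi\mM\mi^{\top})-\log\det(\eye-\mM\mi\mM\mi^{\top})\big]$, and once the trace part is handled (see below) the essential quantity to minimise is $-\sum_m\log\det(\eye-\mM\mi\mM\mi^{\top})$ subject to the canonical-correlation coupling $\mM_1\mM_2^{\top}=\mP_{d_0}$ (and its analogue for $M>2$). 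Writing each $\mM\mi$ in singular value form, the coupling makes the singular bases compatible and reduces the problem coordinatewise to minimising $-\log(1-s_{1i}^2)-\log(1-s_{2i}^2)$ under $s_{1i}s_{2i}=p_i$; since $(1-s_{1i}^2)(1-s_{2i}^2)=1+p_i^2-(s_{1i}^2+s_{2i}^2)$, this is equivalent to minimising $s_{1i}^2+s_{2i}^2$ at the fixed product $p_i$, and AM--GM forces $s_{1i}=s_{2i}=\sqrt{p_i}$. Hence $\mM\mi\mM\mi^{\top}=\mP_{d_0}$ for every $m$, equivalently $\mM\mi=\mP_{d_0}^{1/2}\mR$ with $\mR$ an arbitrary orthogonal matrix (which still satisfies the coupling, $\mP_{d_0}^{1/2}\mR\mR^{\top}\mP_{d_0}^{1/2}=\mP_{d_0}$, and whose spectral norm is $<1$ since $p_i<1$), and the equality of the solutions across views is exactly where the hypothesis of identical priors on the $\{\rveps\mi\}$, hence equal weights $\lambda\mi$, enters.

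The step I expect to be the main obstacle is this last part of II. The trace contribution $\lambda\mi\tr(\hat\mPsi\mi)=\lambda\mi[\tr\mSigma_{mm}-\tr(\mSigma_{mm}\mM\mi\mM\mi^{\top})]$ is not symmetric in the views once the posterior scales $\mSigma_{mm}$ differ, so one has to show it does not displace the optimum: either by a Lagrange-multiplier argument verifying that the full objective is stationary at $\mM\mi\mM\mi^{\top}=\mP_{d_0}$ — at which point this term collapses to the constant $\lambda\mi\tr(\mSigma_{mm}\mP_{d_0})$, independent of the residual rotational freedom $\mR$ — or by specialising to the whitened setting $\mSigma_{mm}=\eye$ implicit in the isotropic-prior model, in which the objective is manifestly view-symmetric and the coordinatewise AM--GM argument goes through unobstructed. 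Making the coupling constraint precise for an arbitrary number of views and justifying the singular-basis compatibility claim are the remaining technical points; the indirect dependence of part I on $\mM\mi$ through $\mW\mi=\hat\mW\mi$ is harmless, since part I is solved in closed form once the $\mM\mi$ are fixed.
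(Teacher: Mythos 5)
Your part I is correct and is essentially the paper's own argument: the paper likewise writes out the closed-form Gaussian KL terms, isolates the mean-dependent pieces $\frac{1}{2}\lambda_0\|\vmu_0\|^2+\frac{1}{2}\sum_{m}\lambda\mi\|\vmu_{\eps\mi}\|^2$ and absorbs the trace and log-determinant pieces into $\mathcal{K}$, then solves the resulting quadratic program. The only cosmetic difference is that the paper invokes Lagrange multipliers on the constrained form where you substitute the constraint and argue strict convexity of the Hessian $\lambda_0\eye+\sum_m\lambda\mi\mW\mi^{\top}\mW\mi$; the computation and the minimizer \eqref{eqn:meanPhi} are identical.

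For part II the comparison is one-sided: the paper's appendix proof stops after deriving $\vmu_0^*$ and contains no argument whatsoever for the claim about $\mM\mi$, so your sketch already goes beyond what the paper establishes. Your route --- reduce to $\mPsi\mi=\mSigma_{mm}^{1/2}(\eye-\mM\mi\mM\mi^{\top})\mSigma_{mm}^{1/2}$, diagonalize, and apply AM--GM to the log-determinant part under the coupling $s_{1i}s_{2i}=p_i$ --- is the natural one, but the obstacle you flag is genuine and, as far as I can see, not removable under the stated hypothesis. Including the trace contribution, the per-coordinate KL objective is $-\lambda_1\evsigma_{1i}^2 s_{1i}^2-\log(1-s_{1i}^2)-\lambda_2\evsigma_{2i}^2 s_{2i}^2-\log(1-s_{2i}^2)$ subject to $s_{1i}s_{2i}=p_i$, and the Lagrange stationarity conditions at the symmetric point $s_{1i}=s_{2i}=\sqrt{p_i}$ are mutually consistent only when $\lambda_1\evsigma_{1i}^2=\lambda_2\evsigma_{2i}^2$. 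The hypothesis of identical priors gives $\lambda_1=\lambda_2$, but the posterior scales $\evsigma_{mi}^2$ are view-specific encoder outputs, so the symmetric solution is in general not a stationary point of the full KL term. Consequently part II does not follow from ``similar priors'' alone: one must additionally assume the whitened setting $\mSigma_{mm}=\eye$ (or otherwise argue the trace term away), and the multi-view version of the coupling constraint for $M>2$ is never actually specified. This is a gap in the lemma as stated rather than only in your attempt --- the paper simply asserts part II without proof --- but your proposal should either adopt the whitening assumption explicitly or acknowledge that the symmetric $\mM\mi=\mP_{d_0}^{1/2}\mR$ is only optimal for the log-determinant portion of the objective.
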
  
\begin{proof} 	See Appendix \ref{apdx:proof_mean_PCCA} for the proof.
\end{proof}

According to the inference network, the optimal $\vmu_{0}$ obtained by \eq{eqn:meanPhi} is a function of all the views
that can be viewed as a type of data fusion in the deep space, making it an appropriate choice for the multi-modal setting.
On the other hand, in multi-view setting we are interested in the solution that depends only on the primary view available at the test time.
To deal with this, we can solve a revised version of the optimization problem \eq{eqn:opt_meanPhi} by ignoring the terms that are depending on the non-primary views
which leads to the minimizer 
\begin{align} \label{eqn:meanPhi_1}
\hat{\vmu}_{0} = (\lambda_0 \eye + \lambda_1 \mW_1^{\top} \mW_1 )^{-1} \lambda_1 \mW_1^{\top} \vmu_1 .
\end{align}

\paragraph{Remark} As an alternative approach in the multi-view setting, one can train the model using the optimal inference based on both views in equation \eq{eqn:meanPhi} while using the primary view-based estimate $\hat{\vmu}_{0}$ in \eq{eqn:meanPhi_1} at the test time, 
but our empirical studies showed that using the same inference as in \eq{eqn:meanPhi_1}
for both training and test time offers richer shared representation variable resulting in slightly better performance in the downstream  tasks.

\paragraph{Remark} Another possible approach is to treat $\vmu_{0}$ as an extra parameter that is directly inferred by a deep NN, but this needs more NN layers to train and in practice we found this approach less efficient than the proposed optimal procedure.

We further assume that the rotation matrix $\mR$ is identity in the solutions to the probabilistic linear models \eq{eqn:ml_est_PCCA}, 
while leaving it to the deep generative network to approximate the rotation. Specifically, in our neural network architecture we select a fully connected as the first layer of the decoder to exactly mimic the rotation matrix.

In summary, the encoders together with the parameterizations of the model in \eq{eqn:ml_est_PCCA} 
provide variational inference network for the parameters of the \textit{latent probabilistic multi-view  model},
$\{\mP_{d_0}(\rvx_1), \vmu_{0}, \mW\mi (\rvx\mi ), \mPsi\mi (\rvx\mi ), \vmu_{\eps\mi } \}_{m=1}^M$,
as non-linear functions of the observations.

\paragraph{Drawing samples from the latent variables:}
Given the variational parameters of the latent probabilistic CCA model,
one can draw samples of the latent factors $\{\statevec, \rveps_1, \rveps_2\}$ 
from the approximate posteriors $\{ q_{\eta}(\statevec ), q_{\eta}(\rveps_1 ), q_{\eta}(\rveps_2 ) \}$,
using a differentiable transformation based on the reparameterization trick \citep{kingma2019introduction_VAE},
and generate latent representations as 
$ \rvz_1 =  \mW_1 \statevec + \rveps_1$ 
and 
$\rvz_2 =  \mW_2 \statevec + \rveps_2$, 
which are fed into the decoders
to generate samples $ \{\hat{\rvx}_1, \hat{\rvx}_2 \} $ at the observation space.
This procedure produces latent samples that satisfy the conditional independence rule of the probabilistic CCA while being cross correlated as specified by the variational canonical correlation $\mP_{d_0}$. 
Therefore, the reconstruction error term can be stated as 
\begin{align*} 
\E_{q_{\eta}} & [ \log p_{\theta}(\rvx \given \rvz) ]  = \nonumber \\
& \E_{q_{\eta}(\statevec ), q_{\eta}(\rveps_1 )} [ \log p_{\theta_1}(\rvx_1 \given \rvz_1=  \mW_1 \statevec + \rveps_1) ] + \nonumber \\
& \E_{q_{\eta}(\statevec ), q_{\eta}(\rveps_2 )} [ \log p_{\theta_2}(\rvx_2 \given \rvz_2=  \mW_2 \statevec + \rveps_2) ].\nonumber 
\end{align*}

\subsection{Related work}

To capture nonlinearity in the multi-view data, several kernel-based methods have been proposed \citep{Hardoon_2004_KCCA,Bach_2003_KIC}.
Kernel-based methods, in general,  require large memory to store a massive amount of training data to use in the test phase and in particular kernel-CCA requires an $N \times N$ eigenvalue decomposition which is computationally expensive for large datasets.
To overcome this issue, some kernel approximation techniques based on random sampling of training data are proposed in \citep{williams2001nystromKCCA} and \citep{lopez2014randomizedKCCA}.
Moreover, the probabilistic non-linear multi-view learning are considered in \citep{shon2006learning, damianou2012manifold}.
As an alternative, deep neural networks (DNNs) offer powerful  
parametric models that can be trained for large pools of data  using the recent advances of the stochastic optimization algorithms.  
In the multi-view setting, a deep auto-encoder model, called (SplitAE), was designed in \citep{ngiam2011multimodalDL} in which an encoder maps the primary view to the latent representation and two encoders are trained so that the reconstruction error of both views are minimized.
On the other hand, the classical CCA is extended to deep CCA (DCCA) in \citep{andrew2013DCCA} by replacing the linear transformations of both views with two deep nonlinear NNs and then learning the model parameters by maximizing the cross correlation between the nonlinear projections.
DCCA is then extended to deep CCA autoencoder (DCCAE) in \citep{wang2015deepMV} where it leverages autoencoders to additionally reconstruct the inputs hence intruding extra reconstruction error terms to the objective function. 
While DCCAE could improve the representation learning over the DCCA,
empirical studies showed that it tends to ignore the added reconstruction error terms
resulting in poor reconstructed
views \cite{wang2015deepMV}.
The training algorithms of such classical CCA-based methods require sufficiently large batch size to approximate the covariance matrices and the gradients.
Moreover, they also do not naturally provide an inference model to estimate shared latent factor 
and do not enable generating sampling from the model in the input space 
while also being restricted to the two-view setting.
In contrast, the reconstruction error terms appear naturally in the objective function of the variational inference, the ELBO, so play a fundamental role in the training
and thus richer decoder and reconstruction are expected using the proposed variational autoencoders.
Furthermore, the stochastic backpropagation method  with small mini-batches has proven as a standard and scalable technique for training deep variational autoencoders \citep{rezende2014stochastic}. 
Moreover, the probabilistic multi-view model 
enables enforcing desired structures such as sparsity \citep{archambeau2009sparse}
by adopting a broader range of exponential family distributions for priors and approximate posteriors on the latent factors to capture
while this property is not immediately apparent in the classical CCA-based variants.

More recently, a variational two-view autoencoder was proposed in \citep{wang2017acousticVCCA, wang2016_deepVCCA}  
that in principle offers a generative two-view model with shared representation or 
shared + view-specific factors.
Despite the name of the method, in theory these works fail to draw connections between the proposed two-view models with the canonical directions and the probabilistic CCA interprtation in \cite{bachPCCA} 
while the inference is not customized beyond the black box variational inference   
which can explain why they perform weaker than DCCAE in some experimental studies.   
On the other hand, in comparison to the deep probabilistic CCA proposed here, the shared latent representation 
equally contributes in both views, 
so these variational two-view methods can be viewed as special cases of the more generic model proposed here when $\mW_1=\mW_2=\eye$
hence, they are expected to offer lower flexibility in modeling two-view dataset compared to the deep probabilistic CCA.

\paragraph{Remark} 
Normalizing flows is a technique to specify a flexible and arbitrarily complex distribution 
by applying a sequence of invertible transformations on a simple base distribution
\citep{rezende_shakir2015_variationalInfWithNormalizingFlows}.
In the deep probabilistic multi-view model, we can also obtain a more complex approximate posterior by applying a rich normalizing flow, such as those in \citep{karami2019invertibleConF}, on the Gaussian distributed latent variables $\{\rvx\mi \}_{m=1}^M$ generated by latent linear probabilistic multi-view layer.
By reducing the gap between the true posterior and its approximate,
this technique is expected to provide a  more expressive generative model for complex multi-view applications hence serving as a potential candidate for future studies.

\section{Experiments} \label{sec:Experiments}
We empirically evaluate the representation learning performance of the proposed method and compare against some well established baseline algorithms on two scenarios: 
I) when all views are available at training but only a single view (the primary view) is available at the test time, namely multi-view setting, and
II) all the views are available at the train and test time, namely multi-modal setting.

\subsection{Multi-view experiments} \label{sec:exp_MNIST}

\begin{table}[t!] 
	\centering
	\begin{tabular}{l|c | c |c}
	Method 									& \small{Error $(\%) $}  & \small{NMI $(\%) $}  & \small{ACC $(\%) $}  \\
	\toprule
	\textbf{Linear CCA }		& 	19.6 &                 56.0                 &                 72.9                 \\ \hline
	\textbf{SpliAE }				& 	11.9 &                 69.0                 &                 64.0                 \\ \hline
	\textbf{KCCA 	}				&  	5.1 &                 87.3                 &                 94.7                  \\ \hline
	\textbf{DCCA }				& 		2.9		&                   92.0                &                 97.0                  \\ \hline
	\textbf{DCCAE }				& 2.2 &                             93.4      &                 97.5                  \\ \hline 
	\textbf{VCCA}							& 3.0 &                 -                  &                 -                  \\ \hline
	\textbf{VCCA-private }				& 2.4 &                 -                  &                 -                  \\ \hline 
	\hline
	\textbf{VPCCA}				& \textbf{1.9} &             \textbf{94.8}             &             \textbf{98.1}             \\ \hline 
	\bottomrule
\end{tabular}
	\caption[Classification and clustering performance of multi-view learning algorithms on noisy two-view MNIST]{\small Performance of the downstream tasks for different multi-view learning algorithms on the noisy two-view MNIST digit images.
		Performance measures are classification error rate (the lower the better), normalized mutual information (NMI) and accuracy (ACC) of clustering (the higher the better) \citep{cai2005documentclustering}. 
		\textbf{VPCCA}: multi-view setting, \ie only primary view is available at the test time so $\vmu_{0}$ of equation \eq{eqn:meanPhi_1} is used.
		The results of variational PCCA method are averaged over 3 rials where the results of the baseline methods are from \citep{wang2015deepMV,wang2016_deepVCCA}. 
		The baseline methods are
		\textbf{Linear CCA}: linear single layer CCA,
		\textbf{DCCA}: deep CCA \citep{andrew2013DCCA},
		\textbf{Randomized KCCA}: randomized kernel CCA approximation with Gaussian RBF kernels and  random Fourier features  \citep{lopez2014randomizedKCCA},
		\textbf{DCCAE}: deep CCA-Auto encoder \citep{wang2015deepMV},
		\textbf{VCCA}:  multi-view variational auto-encoder  \citep{wang2016_deepVCCA} 
		\textbf{VCCA-private}:  shared-private multi-view variational auto-encoder  \citep{wang2016_deepVCCA}.
	}
	\label{tbl:classERR}
\end{table}

\paragraph{\textbf{Experimental design}:}
For the experimental study, we used the two-view noisy MNIST datasets of \citep{wang2015deepMV} and \cite{wang2016_deepVCCA} created based on MNIST handwritten digits that consists of grayscale images of size $28 \times 28$ pixels with pixel values scaled to range $[0, 1]$.
The first view of  the dataset is synthesized  by  rotating each image at angles randomly sampled from uniform distribution  $\mathcal{U}(-\pi/4,\pi/4)$ while
the second view is randomly sampled from the images with similar identity to the first view but not necessary the same image and then is corrupted by random uniform noise while the final value is truncated to remain in range  $[0, 1]$.
As a result of this procedure, both views are just sharing the same identity (label)  of the digit but not the style of the handwriting as they are from arbitrary images in the same class. 
The training set is divided into training/validation subsets of length $50K/10K$ and the performance is measured on the $10K$ images in the test set.

To make a fair comparison, we used neural network architectures with the same capacity as those  used in \citep{wang2015deepMV} and \citep{wang2016_deepVCCA}.
Accordingly, for the deep network models, all the inference networks and decoding networks are composed of 3 fully connected nonlinear hidden layers of size $1024$ units, 
where \texttt{ReLU} gate is used as nonlinearity for all the  hidden units of the deep networks. 
The first and the second encoder specify $ (\vmu_1, \text{diag}(\evsigma_{1i}^2)) = f_1(\rvx_1; \theta_1)$, $ (\vmu_2, \text{diag}(\evsigma_{2i}^2)) = f_2(\rvx_2; \theta_2)$ 
where the variances are specified by \texttt{softplus} function, 
and an extra encoder models the canonical correlations  $\text{diag}(p_i) $ using the 
\texttt{sigmoid} function as the output gate.
Independent Bernoulli distributions and independent Gaussian distributions are selected 
to specify the likelihood functions of the first and the second view, respectively, 
with the parameters of each view being specified by its own decoder network;
the \texttt{sigmoid} functions are applied on the outputs to estimate the means of both views while the variances of the Gaussian variables are specified by \texttt{softplus} functions. 
In order to prevent over-fitting, stochastic drop-out \cite{srivastava2014dropout} is  applied  to all the layers as a regularization technique.
ADAM optimizer \citep{kingma2014adam} is adopted  for training of the parameters of the deep neural networks. 
The details of the experimental setup can be found in Appendix \ref{apdx:modelArch}.

To evaluate learned representation, the discriminative task and clustering task are examined on the shared latent variable.
For the discriminative goal, the one-versus-one linear SVM classification algorithm is applied on the shared representation $\statevec$. The parameters of the SVM algorithm are tuned using the  validation set and the classification error is measured on the test set.
We have also performed spectral clustering algorithm \citep{vonLuxburg2007tutorialSpectralClustering} on the \textit{k}-nearest-neighbor graph constructed from the shared representation.
To comply with the experiments in \citep{wang2015deepMV} the degree (number of neighbors) of the nodes is tuned in the set $ \{5, 10, 20, 30, 50 \}$ using the validation set,
and finally, it uses \textit{k}-means as the last step to construct final partitioning into $10$ clusters in the embedding space.
The proposed deep probabilistic CCA is compared against the available multi-view methods 
in terms of the performance of the downstream tasks reported in Table \ref{tbl:classERR},
where the results highlight that proposed variational model significantly improves the 
representation learning in multi-view datasets. 

2D embeddings of the shared latent representations using t-SNE is also visualised in Appendix \ref{apdx:tsne}, which quantitatively verify that the learned features of the images of different classes are well separated.

Repeating the experiments on multi modal setting (\ie both views are available at the test time) and using the equation \eq{eqn:meanPhi} to recover the mean of the shared latent variable can significantly improve the performance of downstream tasks resulting in classification error=$0.4\%$ and clustering NMI=$98.3\%$ or ACC=$99.4\%$,
confirming the merit of the proposed algorithm in successfully integrating information from different modality.

\subsection{Multi-modal clustering}
An important and interesting application of the proposed deep generative model is in clustering multi-modal datasets which we evaluate in this set of experiments.
Recently, a deep multi-modal subspace clustering \citep{abavisani2018multimodal} has been successfully extended the idea of 
deep subspcae clustering (DSC) \citep{ji2017deepSupspaceClustering} into multiple modalities.
The key component of such approaches is applying a self-expressive layer on a non-linear mapping of the data obtained by deep auto-encoders, which represent the projection of the data points as a linear combination of other data points projections.
Although offering significant improvement in clustering performance for datasets lying in non-linear subspaces, 
such methods require a self-representation coefficient matrix of size $N \times N $ where $N$ is the number of data points which makes them prohibitively expensive for large datasets. 

\subsubsection{Datesets}
The clustering performances of the proposed method are evaluated  over the following standard datasets.
Samples from all modalities of these datasets are depicted in Figure \ref{fig:samples_dataset}. 

\begin{figure*}[t!]\centering
	\newcommand{\figpath}{\figtablepath/}
	\begin{minipage}{.4\linewidth}
	\centering
		\subfigure[]{
			{\label{fig:samples_MNIST}
				\includegraphics[width=0.60\textwidth]{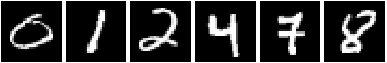}}}
		\subfigure[]{
			{\label{fig:samples_USPS}
				\includegraphics[width=0.60\textwidth]{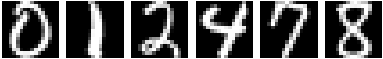}}}
		\subfigure[]{
			{\label{fig:samples_YaleB}
				\includegraphics[width=.60\textwidth]{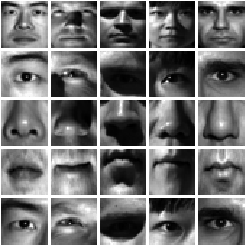}}}
	\vspace{-.1in}
	\caption[Sample images form multi-modal datasets]{\footnotesize \label{fig:samples_dataset}
		(a) Sample images form MNIST dataset,
		(b) and their corresponding samples from the second modality, drawn from USPS datasets.
		(c) Sample images from faces and face components from Extended Yale-B dataset; modalities are showed in different rows.
	}
	\end{minipage}
	\hfill
    \begin{minipage}{.57\linewidth}
    \centering
        \subfigure[]{
			{\label{tbl:clusterin_multimodal}
				\begin{tabular}{c|ccc|ccc}
	& \multicolumn{3}{c}{\textbf{Digits}} & \multicolumn{3}{c}{\textbf{Extended Yale-B}} \\
	&  ACC  &  NMI  &         ARI         &  ACC  &  NMI  &             ARI              \\ \toprule
	\textbf{CMVFC}      & 47.6  & 73.56 &        38.12        & 66.84 & 72.03 &              40              \\ \hline
	\textbf{TM-MSC}      & 80.65 & 83.44 &        75.67        & 63.12 & 67.06 &            38.37             \\ \hline
	\textbf{MSSC}       & 81.65 & 85.33 &        77.36        & 80.3  & 82.78 &            50.18             \\ \hline
	\textbf{MLRR}       & 80.6  & 84.13 &        76.53        & 67.62 & 73.36 &            40.85             \\ \hline
	\textbf{KMSSC}      & 84.4  & 89.45 &        79.61        & 87.65 & 81.5  &            63.83             \\ \hline
	\textbf{KMLRR}      & 86.85 & 80.34 &        82.76        & 82.45 & 85.43 &            59.71             \\ \hline
	\textbf{DMSC} & 95.15 & 92.09 &        90.22        & 99.22 & 98.89 &            98.38             \\ \hline
 	\textbf{VPCCA} & \textbf{98.78}& \textbf{96.72}& \textbf{97.35}& \textbf{99.71}& \textbf{99.52}& \textbf{99.19}\\	 
 	\hline\hline
\end{tabular}}
		}
		\caption[Performance for different multi-modal clustering algorithms]{\footnotesize
		\label{tbl:clustering}
		Performance for different multi-modal clustering algorithms on two-modal  handwritten digits made from MNIST and USPS 
		and multi-modal facial components extracted from Yale-B dataset. 
		Performance measures are clustering accuracy rate (ACC), normalized mutual information (NMI) and adjusted Rand index (ARI),  all measures are in percent and the higher means the better.
		Here, we assume all modalities are available at the test time so \textbf{VPCCA} uses $\vmu_{0}$ of equation \eq{eqn:meanPhi} is used.
		The results of variational PCCA method are averaged over 3 rials.
		The baseline subspace clustering methods are
		TM-MSC \citep{zhang2015TMMSC}, CMVFC \citep{cao2015constrained},  MSSC, MLRR , KMSSC, KMLRR \citep{abavisani2018multimodal} and DMSC \citep{abavisani2018deep} 
		The results of the baseline methods are from \citep{abavisani2018deep}.
	}
	\end{minipage}
\end{figure*}

\textbf{Handwritten Digits:}
We chose two famous handwritten digits datasets MNIST \citep{MNIST1998} and USPS \citep{hull1994usps} that  consist of grayscale digit images of size $28 \times 28$ pixels and $16 \times 16$ pixels, respectively.
To make multi-modal dataset,  each digit image in the MNIST dataset is paired with an arbitrary sample of the same digit identity but from USPS dataset.
This process guarantees that the images of both modalities are just sharing the same identity (label)  of the digit but not the style of the handwriting.
The handwritten digits datasets are used for single-modal training and also for multi-modal, with $M=2$, subspace clustering.

\textbf{Multi-modal Facial Components:}
We also evaluate the proposed method on the multi-modal facial dataset used in \citep{abavisani2018deep},
where the Extended Yale-B dataset \citep{lee2005YaleB} was used as the base and 4 facial components are extracted, by cropping eyes, nose and mouth, and formed 5 different modalities, including the whole face image.
All modalities are resized to images of size $28 \times 28$ pixels.
This dataset is composed of 64 frontal images of 38 individuals under different illuminations and is a standard dataset in subspace clustering studies.
For this multi-modal data, we train the general deep probabilistic multi-view model that extends the deep probabilistic CCA to arbitrary number of views \eq{eqn:gMBFA} (Appendix \ref{apdx:GPMV}).

\paragraph{Experimental design:}

To make a fair comparison, in this set of experiments, the encoders and decoders  are built of neural networks with similar architectures as those  used in \citep{abavisani2018deep},
except that our model does not require the self-expressive layer,
a linear fully connected layer with parameter matrix of size  $N\times N$ coefficients.
This key advantages of the proposed model, indeed, significantly reduces the total number of parameters specially for large input sizes.

Accordingly, the encoders (inference networks) of all modalities are composed of convolutional NN (CNN) layers while the decoders (observation networks) are built of transposed convolution layers.
\texttt{ReLU} gate is used as nonlinearity for all the  hidden units of the deep networks.
The encoders specify $ (\vmu\mi, \text{diag}(\evsigma\mi {i}^2)) = f\mi (\rvx\mi; \theta\mi )$,
where the variances are model by \texttt{softplus} function.
An extra encoder network models the canonical correlations, $\text{diag}(p_i) $, using the
\texttt{sigmoid} function as the output gate.
The observation likelihood functions of all the views, $p_{\theta\mi }(\rvx\mi \given \rvz\mi )$, are modeled by independent Bernoulli distributions
with the mean parameter being specified by decoder networks, $g\mi (\rvz\mi; \theta\mi )$;
the \texttt{sigmoid} functions are applied to estimate valid means of the distributions.
To train the parameters of deep generative model, we used ADAM optimization \citep{kingma2014adam} with learning rate of $.0002$ and default hyper-parameters and minibatch size of 200 data points.
Details of model architecture and experimental setup together with more empirical results are presented in appendix \ref{apdx:modelArch}.

Clustering is performed on the shared latent factor $\statevec$ using spectral clustering algorithm \citep{vonLuxburg2007tutorialSpectralClustering} on the \textit{k}-nearest-neighbor graph, with number of neighbors $k=5$.
As the last step, the spectral clustering discretizes  the real-valued  representation in the embedding space to extract the final partitioning.
Clustering performance are measured using clustering Accuracy rate (ACC), Normalized Mutual Information (NMI) \citep{cai2005documentclustering} and Adjusted Rand Index (ARI) \citep{rand1971metricARI} as performance metrics.

The clustering performance of the proposed method is compared against the well established subspace clustering methods
TM-MSC \citep{zhang2015TMMSC}, CMVFC \citep{cao2015constrained},  MSSC, MLRR , KMSSC, KMLRR \citep{abavisani2018multimodal} and DMSC \citep{abavisani2018deep} that are used as the baselines methods for multi-modal setting.
The  results summarized in Table \ref{tbl:clustering} show that 
the proposed deep generative model sets new state-of-the-art
which, subsequently, highlights that the proposed method can efficiently leverage the extra modalities and extract the common
underlying information  among the modalities, that is the cluster memberships.

\section{Conclusion}
In this work, we developed a simple, yet powerful, tool for multi-view learning based on the probabilistic interpretation of CCA.
A deep  generative probabilistic model for multi-view data was studied. 
It has been shown that following the theoretical formulation of the linear probabilistic CCA model in conjunction with variational inference principles for deep generative networks,
we can obtain a scalable end-to-end learning algorithm for  multi-view data.
Experimental results have shown that this can efficiently integrates the relationship between multiple views to obtain a more powerful representation which achieved 
state-of-the-arts performance on several downstream tasks.
These indeed suggest that the proposed method is a proper way of extending variational inference to deep probabilistic multi-view learning.

\bibliography{paper.bib}
\bibliographystyle{plainnat} 

\clearpage
\begin{appendices}
\section{Proof of theorem \ref{thm:PCCA}} \label{apdx:proof_PCCA}
The marginal mean and covariance
matrix of the joint views $ \rvz = (\rvz_1, \rvz_2)$ under the linear probabilistic model \eq{eqn:gPCCA} are 
$\vmu = \begin{pmatrix} \mW_1 \vmu_0 + \vmu_{\eps_1} \\ \mW_2 \vmu_0 + \vmu_{\eps_2} \end{pmatrix}$
and
$\mSigma = \mW \mW^{\top} + \mPsi$ 
where we define 
$\mW = \begin{pmatrix} \mW_1 \\ \mW_2 \end{pmatrix}$ and
$ \mPsi = \begin{pmatrix} \mPsi_1 & \zerovec \\ \zerovec & \mPsi_2 \end{pmatrix}$, therefore, similar to the proof in \citep{bachPCCA},
the negative log-likelihood of the data can be written as 
\begin{align*}
    \ell_1 & = \frac{n(d_1+d_2)}{2} \log 2\pi + \frac{n}{2} \log | \mSigma | \\
    & \quad \quad + \frac{1}{2} \sum_{j=1}^{n} \tr \mSigma^\inv (\rvz_j - \vmu)(\rvz_j - \vmu)^\top \\
    & = \frac{n(d_1+d_2)}{2} \log 2\pi + \frac{n}{2} \log | \mSigma | \\
    & \quad \quad + \frac{n}{2} \tr \mSigma^\inv \Tilde{\mSigma}
    + \frac{n}{2} (\Tilde{\vmu} - \vmu) \mSigma^\inv (\Tilde{\vmu} - \vmu)^\top
\end{align*}
Maximizing $\ell_1$ with respect to $\vmu$ results in a maximum
$\vmu = \begin{pmatrix} \mW_1 \vmu_0 + \vmu_{\eps_1} \\ \mW_2 \vmu_0 + \vmu_{\eps_2} \end{pmatrix} = 
\begin{pmatrix} \vmu_1 \\ \vmu_2 \end{pmatrix}$ and the negative log-likelihood is reduced to 
\begin{align*}
    \ell_1 & = \frac{n(d_1+d_2)}{2} \log 2\pi + \frac{n}{2} \log | \mSigma | + \frac{n}{2} \tr \mSigma^\inv \Tilde{\mSigma}
\end{align*}
The rest of the proof follows immediately along the line of proof in \cite{bachPCCA}. 

\section{Generalization of probabilistic CCA} \label{apdx:GPMV}
As an extension for probabilistic CCA to arbitrary number of views, \citep{archambeau2009sparse} proposed a general probabilistic model as follows:
\begin{align} \label{eqn:gMBFA2}
\rvz\mi =& \mW\mi \statevec_0 + \mV\mi \statevec\mi + \vmu\mi + \nu\mi, \forall m \in \{1, ..., M\}\\
& \quad \mW\mi \in \RR^{d\mi \times d_0},  \mV\mi \in \RR^{d\mi \times q\mi }, \nonumber
\end{align}
This model  can also be viewed as multibattery factor analysis (MBFA) \citep{klami2014GroupFactorAnalysis}, \citep{browne1980MBFA}
which describes the dependency between all the views by a single latent vector, $\statevec_0$, and explains away the view-specific variations by factors private to each view.
Moreover, limiting to single view, this model also includes the probabilistic factor analysis and probabilistic PCA as a special cases if  the view-specific dimensions are independent and isotropic \citep{archambeau2009sparse}.

Explaining  the view-specific variations  by the variance matrices for each view, we can represent the probabilistic multi-view model as
\begin{align} \label{eqn:PMV}
\statevec  &\sim \normal(\vmu_0, \eye_{d_0}), \nonumber\\
\rvz \mi \given  \statevec & \sim \normal(\mW\mi \statevec + \vmu_{\eps\mi }, \mPsi\mi ), \\
 \quad \mW\mi & \in \RR^{d \mi \times d_0}, \mPsi \mi \succcurlyeq 0, ~ m \in \{1, ..., M\} \nonumber
\end{align}
Where the latent factor $\statevec$ captures the common variations  between all the views hence describing the essence of multi-view data.
On the other hand, since the cluster memberships can be considered as a common information in all the views, 
this shared underlying representation is well suited for subspace clustering in multi-view setting.
Let $(\vmu\mi, \mSigma_{mm})$ be the mean and covariance matrices of $\rvz\mi $.
Inspired by the maximum likelihood solutions of probabilistic CCA in theorem \ref{thm:PCCA} and \citep{bachPCCA}, that reformulate the parameter estimate for the probabilistic model based on the classical CCA solutions, we can propose the following system of equation for the parameters of the probabilistic multi-view model
\begin{align} \label{eqn:param_PMV}
{\mW}\mi  &= \mSigma_{mm}^{1/2} \mU\mi \mP_{d_0}^{1/2} \mR \nonumber \\
{\mPsi}\mi  &= \mSigma_{mm} - {\mW}\mi {\mW}\mi^{\top} \nonumber \\
{\vmu}_{\eps \mi } &= \vmu\mi - {\mW}\mi \vmu_0
\end{align}
Where  $\mP_{d_0}=\text{diag}([p_0, ..., p_{d_0}])$ with diagonal entries $p_{j}\in [0., 1.]$
and $\mU\mi $ are composed of orthonormal vectors $\{\vu_{mi}\}$.
To simplify the model, we assume $\mU\mi = \mU, \forall m \in \{1,...,M\}$.
The equations in \eq{eqn:param_PMV} reduces to maximum likelihood estimate of PCCA for $m=2$ views, hence can be viewed as an extension of PCCA for multi-view with more than two views.
Defining the correlation matrix as
$\mC_{lm} := \mSigma_{ll}^{-1/2} \mSigma_{lm} \mSigma_{mm}^{-1/2}$ ,
equations in \eq{eqn:param_PMV} imply that $\mP_{d_0}$ and $ \mU$ are formed by the singular value and singular vectors of the correlation matrix, respectively, \ie $\mC_{lm} =  \mU \mP_{d_0} \mU^{\top} $,
therefore, analogous to the ML solution of PCCA,
$\mP_{d_0}$ and $ \mSigma_{mm}^{-1/2} \mU\mi $ can be interpreted as matrices of \textit{ canonical correlations} and \textit{canonical directions}.
This also implies that all the pairs of the views have similar correlation matrix.

In section \ref{sec:var_inf}, a closed form solution to infer $\{\vmu_{0}, \vmu_{\eps \mi }\}_{m=1}^M$ based on other variational statistics of the model is presented. 
We will also provide a simple treatments for $\mU$ and $\mR$.
As a consequence, given the first and second order moments of the views together with the diagonal matrix of canonical correlations $\mP_{d_0}$, one can infer the rest of the parameters for the multi-view generative model in \eq{eqn:PMV}.
This, in fact, simplifies the variational inference network to just learn a compact set of parameters.

It is worth noting that, although the parametrization of model is based on a single shared latent factor (and also a single correlation matrix to explain the relationship between all the views) but it can be seen that the contribution of the shared factor in $m^{th}$ view is controlled by the factor loading $\mW\mi $ that is, in turn, a function of
$\mP_{d_0}$ and $\mSigma_{mm}$, which is a view specific parameter.
Therefore, the shared factor does not equally influence the views but instead its effect on each view varies by the strength of its projection $\mW\mi \statevec$ which results in dissimilar cross-covariances for the different pairs of views, $\mSigma_{ml}, m \ne l $
therefore offering flexibility to model uneven dependencies between the views.


\section{Some proofs of section   \ref{sec:var_inf}} \label{apdx:proof_mean_PCCA}

\subsection{Proof of additive property of KL \eq{eqn:kl}}

Conditional independence of the latent variables
$\{\rvz\mi \given \statevec \}$ induced by the probabilistic graphical model of latent linear layer \eq{eqn:gPCCA} implies that the approximate posterior of the set of latent variables can be factorized  as
\begin{align}
     q_{\eta}(\rvz \given \rvx) = q_{\eta}(\statevec \given  \rvx)
    \prod_{m=1}^{M} q_{\eta}(\rvz\mi \given \statevec, \rvx).
\end{align}
In addition, assuming independent prior distribution on the latent variables, \ie 
$ p(\rvz) = p(\statevec) \prod_{m=1}^{M} p(\rvz\mi)$
leads to  

\begin{align*} 
&\KL [ q_{\eta}(\rvz \given \rvx) \Vert p(\rvz) ] = 
\int q_{\eta}(\statevec \given  \rvx)\prod_{m=1}^{M} q_{\eta}(\rvz\mi \given \statevec, \rvx) \times \\
& \quad \quad \quad  \log \frac{q_{\eta}(\statevec \given  \rvx)
    \prod_{m=1}^{M} q_{\eta}(\rvz\mi \given \statevec, \rvx)}
    {p(\statevec) \prod_{m=1}^{M} p(\rvz\mi)}  \\
& \quad \quad \quad  =  \int q_{\eta}(\statevec \given  \rvx) \log \frac{q_{\eta}(\statevec \given  \rvx)}{p(\statevec )} \\
& \quad \quad \quad   + \sum_{m=1}^{M} 
\int q_{\eta}(\rvz\mi \given \statevec, \rvx) \log \frac{q_{\eta}(\rvz\mi \given \statevec, \rvx)}{p(\rvz\mi)} \\
& = \KL [ q_{\eta}(\statevec \given \rvx) \Vert p(\statevec) ] +
\sum_{m=1}^{M}\KL [ q_{\eta}(\rveps\mi \given \rvx) \Vert p(\rveps\mi ) ] \hfill \blacksquare
\end{align*}

\subsection{Proof of Lemma \ref{thm:mean_PCCA}}
Assuming isotropic multivariate Gaussian priors on the latent variables as 
$\statevec \sim \normal(\mathbf{0}, \lambda_{0}^{-1} \eye)$,
$\rveps\m \sim \normal(\mathbf{0}, \lambda\m^{-1} \eye)$
and specifying the approximate posteriors as Gaussian distributed vectors with diagonal covariances 
results in closed form solutions for the KL divergence terms \citep{kingma2013VAE} as
\begin{align*}
	\KL [ q_{\eta}(\statevec \given \rvx) \Vert p(\statevec) ] &=
	\frac{1}{2}  \lambda_0 \| \vmu_{0} \|^2 + \nonumber \\
	&\frac{1}{2} \sum_{i = 1}^{d_0} (\lambda_0 - \log \lambda_0 -1) \nonumber \\
	\KL [ q_{\eta}(\rveps\m \given \rvx) \Vert p(\rveps\m) ] &=
	\frac{1}{2}  \lambda\m \| \vmu_{\eps\m} \|^2 + \nonumber \\
	&\frac{1}{2} \sum_{i = 1}^{d\m} (\lambda\m \evsigma_{mi}^2 - \log \lambda\m \evsigma_{mi}^2 -1) \nonumber 
\end{align*} 
Splitting the terms in the KL divergence to those depending on the mean variables and the remaining ones results 
\begin{align*} 
    \min_{\vmu_{0}} & \frac{1}{2}  \lambda_0 \| \vmu_{0} \|^2 +
    \frac{1}{2} \sum_{m=1}^{M} \lambda\m \| \vmu_{\eps\m} \|^2 + \mathcal{K}  \\
    \text{s.t. }
    \vmu_{\eps\m} &= \vmu\m - \mW\m \vmu_0, \quad \forall m \in \{1,..., M\}  \nonumber
\end{align*}
Now, solving this constraint optimization problem using the method of Lagrange multipliers leads to the optimal minimizer
 \begin{align*} 
 \vmu_{0}^* = (\lambda_0 \eye + \sum_{m=1}^{M} \lambda\m \mW \m^{\top} \mW \m)^{-1} (\sum_{m=1}^{M} \lambda\m \mW\m^{\top} \vmu \m).
 \end{align*}
This provides an analytical approach to optimally recover  $\vmu_{0}$  from the parameters of the model. 

\section{2-D embedding} \label{apdx:tsne}
Figures \ref{fig:TSNE_z1} 
depict the 2D t-SNE embeddings of the shared latent representations for multi-view and multi-modal setting, respectively.
They verify that the representation of the images of different classes are well separated in the shared latent space.\\

\begin{figure}[t!]\centering 
	\newcommand{\figpath}{./FigureTable/}
		{\label{fig:TSNE_phi_mean_v1} 
			\includegraphics[width=0.32\textwidth]{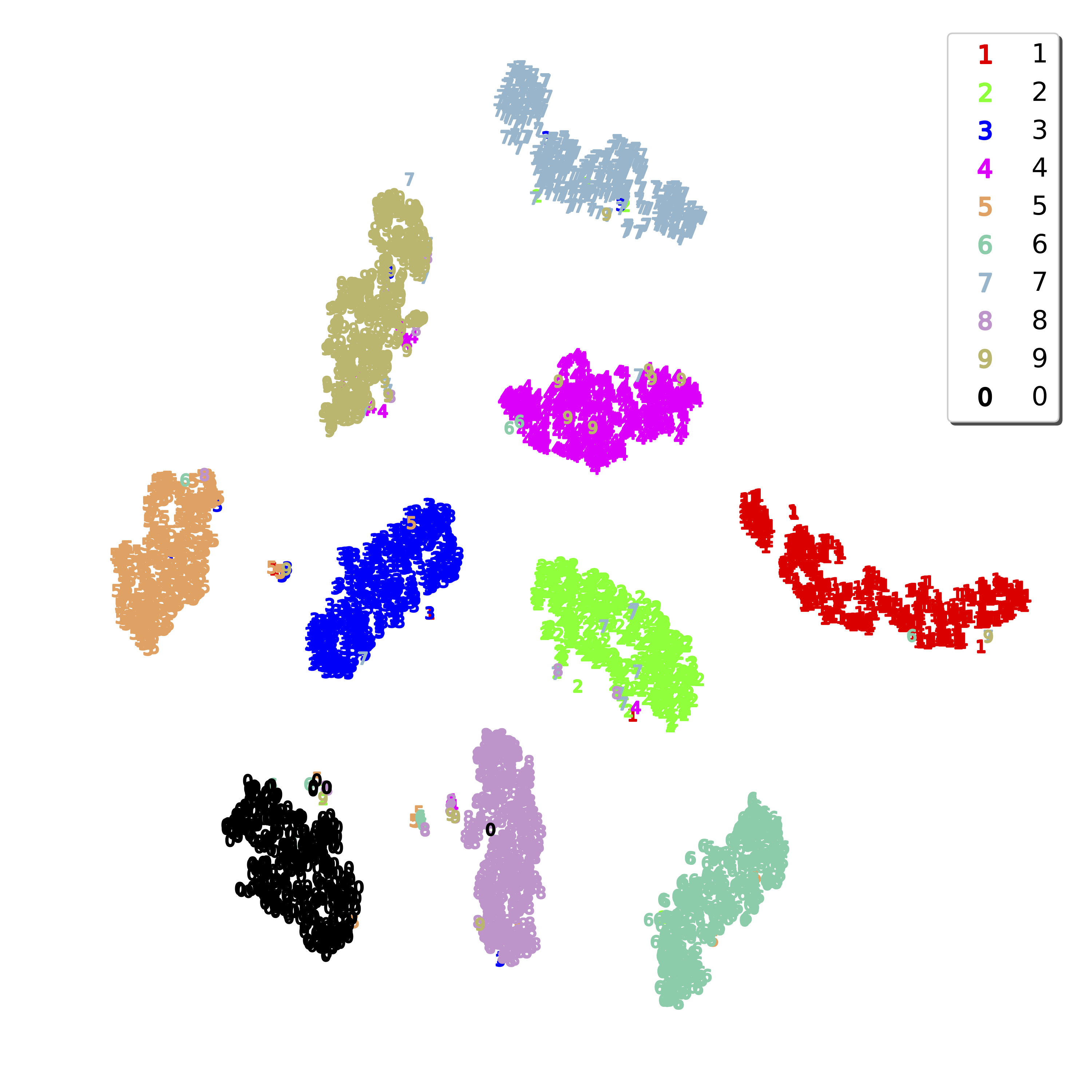}}
	\caption{\footnotesize \label{fig:TSNE_z1}
	2D t-SNE embedding of  the mean of shared representation $\vmu_0 $.
	} 
\end{figure}

\section{Model architecture and training procedure} \label{apdx:modelArch}
\subsection{Two-view noisy MNIST experiments} \label{apdx:modelArch_mnist}
The parameters of each algorithm are tuned through cross validation with grid search over 
$ p_{dropout} \in \{ .0, .2\}$
the variance of the shared representation $\statevec$
$ \lambda_0^\inv \in \{ 100., 500., 2000., 5000. \}$
and equal variance for residual errors $\rveps_1, \rveps_2$ in range $ \lambda_0^\inv \in \{ 8., 4., 2., 1., .5, .25, .125\} $.
Results are averaged over 3 trails.

The dimensionality of the shared representation was $d_0 = 30$ and 
the dimensionality of the latent factors were $d_1 = d_2 = 60$.

Weight decay of 0.0001 was applied as the regularization for all the parameters of NNs.

\subsection{Multi-modal clustering experiments}
\textbf{digits:}
The dimensionality of the shared representation was $d_0 = 30$ and 
the dimensionality of the latent factors were $d_1 = d_2 = 60$.

The parameters of each algorithm are tuned through cross validation with grid search over 
$ p_{dropout} \in \{ .0, .2\}$
the variance of the shared representation $\statevec$
$ \lambda_0^\inv \in \{ 1., 5., 20., 100., 500., 2000. \}$
and equal variance for residual errors $\rveps_1, \rveps_2$ in range $ \lambda_0^\inv \in \{ 8., 4., 2., 1., .5, .25, .125\} $.

\textbf{Yale-B facial components:}
The dimensionality of the shared representation was $d_0 = 120$ and 
the dimensionality of the latent factors were $d_1 = d_2 = 160$.

The parameters of each algorithm are tuned through cross validation with grid search over 
$ p_{dropout} \in \{ .0, .2\}$
the variance of the shared representation $\statevec$
$ \lambda_0^\inv \in \{.2, 1., 5., 20., 100., 500., 2000. \}$
and equal variance for residual errors $\rveps_1, \rveps_2$ in range $ \lambda_0^\inv \in \{ 8., 4., 2., 1., .5, .25, .125\} $.

In both experiments a weight decay of 0.0001 was applied as the regularization for all the parameters of NNs.

\end{appendices}

\end{document}